\documentclass[letterpaper, 10 pt, conference]{ieeeconf}  

\usepackage[utf8]{inputenc}         
\usepackage[T1]{fontenc}            
\usepackage[hidelinks]{hyperref}    
\usepackage{url}                    
\usepackage{booktabs}               
\usepackage{amsfonts}               
\usepackage{nicefrac}               
\usepackage{xcolor}                 
\usepackage{graphicx}
\usepackage{amsmath,amssymb}
\usepackage{hyperref}
\usepackage{paralist}
\usepackage{subcaption}

\newtheorem{lem}{Lemma}

\newtheorem{rmk}{Remark}

\def\R{\mathbb{R}}

\def\mc{\mathcal}

\IEEEoverridecommandlockouts                 
\title{\LARGE \bf
Model-Free Adaptive Parameter Tuning for \\Efficient Multi-Robot Warehouse Operations
}

\author{Pratap Tokekar, Mouhacine Benosman, Rahul Chandan,\\ Alexandre Ormiga Galvao Barbosa, Michael Caldara, Joseph W. Durham%
    \thanks{The authors are with Amazon Robotics. Emails: {\tt \{tokekar, mbenos,}
    {\tt rcd, aormiga, caldaram, josepdur\}@amazon.com}}%
    \thanks{Tokekar holds concurrent appointments as an Associate Professor of Computer Science at the University of Maryland and as an Amazon Scholar. This paper describes work performed at Amazon and is not associated with the University of Maryland.}%
}

\begin{document}
\maketitle
\thispagestyle{empty}
\pagestyle{empty}

\begin{abstract}
Robotic Fulfillment Centers (FCs) store inventory on shelves (pods) arranged in dense blocks. Retrieving a target pod that is buried deep in a block requires moving obstructing pods out of the way (i.e., digout). Multi-robot planners use parameterized cost functions to control digout behavior, producing a spectrum of strategies: at one extreme, obstructing pods are sent to other blocks (using more robots in travel lanes); at the other, pods are shuffled within the block (avoiding lane congestion but increasing extraction time). Each point on this spectrum has different downstream consequences for floor congestion and throughput. The optimal operating point depends on the specific facility configuration and shifts with operational conditions such as varying station demand and congestion patterns, making offline tuning impractical. We present an adaptive parameter tuning framework based on Extremum Seeking Control (ESC) that continuously adjusts planner parameters in response to measured throughput. ESC performs model-free optimization by perturbing parameters with sinusoidal dither signals and correlating perturbations with performance changes to estimate gradients, making it robust to the multi-minute delayed effects and credit assignment challenges inherent in large FC operations. Simulation studies demonstrate that the adaptive policy improves upon fixed policies across several conditions. We observe an improvement in throughput by an average of 5.0\% across map and robot fleet size variations, and by 8.4\% under dynamic operating conditions. This work eliminates manual parameter provisioning and enables real-time adaptation, providing a self-tuning paradigm for FC storage operations.
\end{abstract}

\section{Introduction}\label{sec:Intro}
Robotic Fulfillment Centers (FCs) rely on fleets of mobile robots to execute large-scale storage and retrieval operations. Robots transport pods (shelves containing inventory) to pick stations where associates retrieve items to fulfill customer orders~\cite{wurman2008coordinating,honig2019persistent}. Pods are arranged in dense storage blocks where target pods may be obstructed by other pods and must be extracted, i.e., ``dug out'', to reach the boundary (Figure~\ref{fig:overview}). A central challenge in these systems is deciding how to perform extraction of pods stored deep within a block. Classical planning algorithms compute digout plans that move obstructing pods
  out of the way~\cite{honig2019persistent,fu_2026,li2021lifelong} by minimizing a parameterized cost function (Figure~\ref{fig:overview}). The cost parameters
  control the planner's behavior along a spectrum of strategies (Figure~\ref{fig:twoways}): at one end, obstructing pods are sent to other blocks, using multiple
  robots in travel lanes; at the other, pods are shuffled within the block using free cells, avoiding lane congestion but requiring longer extraction sequences;
  intermediate values produce hybrid plans. Crucially, each point on this spectrum has different downstream consequences for the floor as a whole: sending pods
  away increases lane traffic and storage contention at block peripheries, while in-block shuffling increases local extraction time.  Together, these parameters determine throughput, the metric of ultimate interest.

  Finding the parameter configuration that maximizes long-term throughput, rather than optimizing any individual plan's makespan or path length, is difficult
  for several reasons. No closed-form model relates parameters to throughput, so optimization requires running time-consuming grid searches over expensive
  simulations. Even when good parameters are found in simulation, they may not transfer directly to real systems due to modeling discrepancies in robot dynamics,
  traffic patterns, and station behavior. Moreover, the optimal operating point depends on the specific facility (map layout, robot count, storage density) and
  shifts with dynamic operational conditions such as varying station demand and congestion patterns. Static parameters tuned offline for one configuration are unlikely to remain optimal when
  conditions change, motivating an adaptive approach.


\begin{figure}[thb]
  \centering
  \includegraphics[height=1.1in]{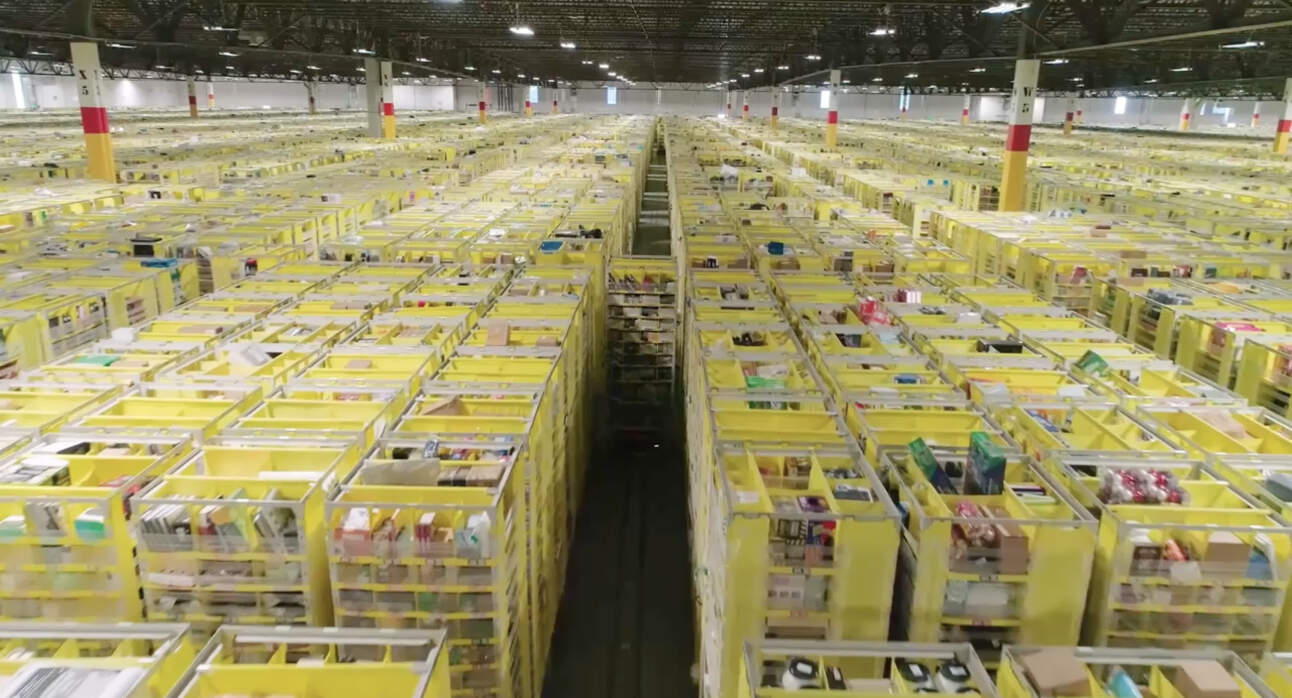}
  \includegraphics[height=1.1in]{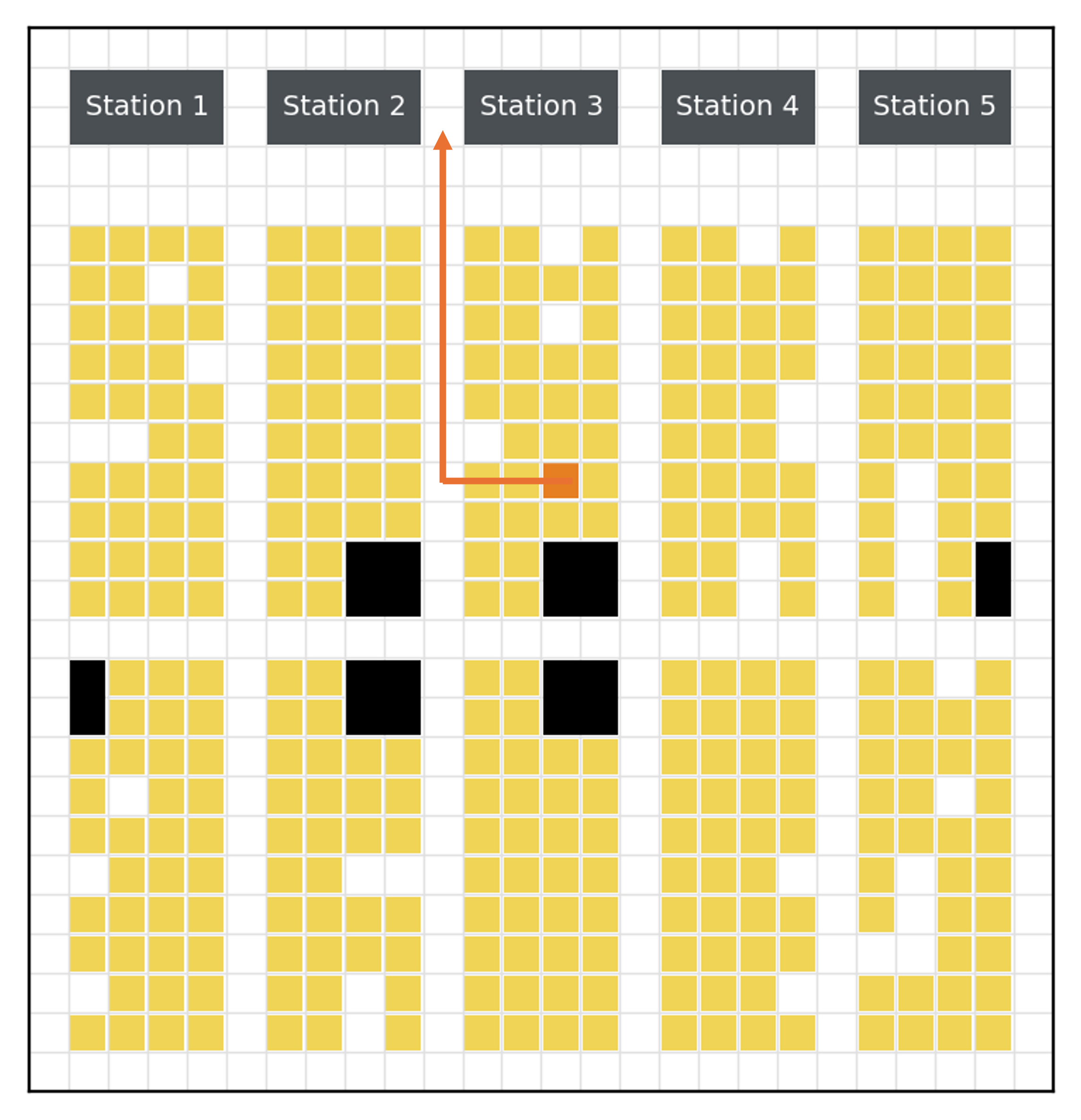}
  \caption{Fulfillment center where the storage pods (yellow) are arranged in grid blocks separated by travel lanes. Robots can lift and move pods to bring them to the stations for item picking or stowing. Target pods (orange) that are buried deep in a block require moving obstructing pods.}
  \label{fig:overview}
\end{figure}

To address these challenges, we propose an adaptive parameter tuning framework based on Extremum Seeking Control (ESC). The framework addresses both automatic parameter provisioning (eliminating grid search) and dynamic parameter adaptation based on real-time feedback. ESC continuously adjusts planner parameters in response to measured throughput, performing derivative-free optimization that is robust to noise, delayed effects, and unmodeled dynamics~\cite{ariyur2003real,krstic2000adaptive,tan2006nonlinear,Benosman2016}. By perturbing parameters with a dither signal and correlating the perturbation with throughput changes, ESC estimates gradients without requiring a closed-form model of the system.

In this paper, we investigate four targeted research questions. First, there is no closed-form model relating parameters to throughput, precluding gradient-based optimization and requiring model-free techniques. A fundamental question is whether parameter perturbations even produce observable throughput changes that can be exploited for optimization (RQ1: \emph{Does dither in cost manifest in throughput?}). Second, parameters only control the local behavior of how a target pod is extracted, but this change ripples through the floor over time. For example, a parameter choice that leads to more robots sending away obstructing pods, rather than shuffling them locally to bubble target pods, might cause congestion in the future, reducing throughput. This delayed effect makes credit assignment of parameter changes difficult (RQ2: \emph{What is the phase delay between parameter changes and throughput response?}). Third, the optimal configuration depends on the specific FC characteristics: number of robots, amount of work, size of the floor, and storage density. The best parameter configuration on one floor may not transfer to another, necessitating a new grid search for each deployment. We investigate whether an adaptive policy can automatically converge to good configurations, eliminating manual parameter sweeps (RQ3: \emph{What is the on-policy performance under nominal conditions?}). Fourth, measuring the effectiveness of a parameter choice requires evaluation on downstream metrics, primarily throughput, which can only be assessed after long simulation runs. Moreover, static parameters cannot adapt when operational conditions change, such as varying station availability or workload patterns (RQ4: \emph{What is the on-policy performance under dynamic conditions?}).

Extensive simulation studies, on a high fidelity discrete event FC simulator, using realistic FC parameters demonstrate that ESC automatically converges to near-optimal parameters across various operational conditions. When fixed policies tuned on one configuration are transferred to different maps or robot counts, they incur an average throughput loss of 5.0\% compared to an ESC policy that adapts online. Under dynamic conditions where 50\% of stations log off mid-operation, ESC improves throughput by 8.4\% overall and 14.3\% in the post-disturbance period by automatically adjusting parameters to match the changed operating regime. This work bridges search-based planning with adaptive control, providing a self-tuning paradigm for dense FC operations.


\section{Related Work}\label{sec_relwork}

\textbf{Extremum Seeking Control.} ESC has a rich history in adaptive control, with foundational work by~\cite{krstic2000adaptive} establishing stability and convergence guarantees under averaging assumptions. \cite{ariyur2003real} extended these results to practical real-time optimization settings. Recent surveys~\cite{tan2006nonlinear, S2024, B2026} highlight applications in automotive, aerospace, and process control, though applications to multi-robot warehouse systems remain limited. Our work extends the application of ESC to the domain of combinatorial planning parameters, where the relationship between cost function weights and system-level throughput is mediated by a large scale discrete event system.

\textbf{Black-Box Optimization for Robotics.} Several derivative-free approaches have been explored for parameter tuning in robotic systems. Bayesian optimization (BO) \cite{SSWAdF2016} constructs probabilistic surrogate models and offers strong sample efficiency, though it typically assumes stationary objectives, and relies on underlying nonlinear optimization algorithms that are not always amenable to simple industrial implementation under compute constraints. Contextual bandits~\cite{AG2013} provide a framework for adaptive decision-making in continuous and discrete settings, though they typically assume immediate reward signals, an assumption that may not hold in FC environments where parameter changes have delayed, cascading effects on throughput. Iterative Feedback Tuning (IFT)~\cite{hjalmarsson1998iterative} offers a data-driven approach to controller parameter optimization using closed-loop experiments to estimate performance gradients. However, IFT requires multiple probes of the system to estimate the cost gradient, which is impractical in real-time and makes it less suitable for continuously varying FC conditions. In contrast, ESC operates in continuous parameter spaces while remaining model-free and offering principled gradient estimation with convergence guarantees, making it well-suited for the non-stationary conditions found in FC operations.

\textbf{Multi-Robot Coordination.} Recent work addresses block rearrangement problem where pods within a
storage block must be moved to their assigned target sets \cite{fu_2026,khan2025multi}. However, adaptive tuning of planner parameters in response to real-time throughput feedback remains underexplored. Our contribution bridges this gap by treating parameter selection as an online feedback gain tuning problem amenable to adaptive control techniques.

\section{Problem Formulation}\label{sec_pf}

An FC floor typically consists of pods containing inventory arranged in grid-like blocks (Figure~\ref{fig:overview}) with several work stations throughout the floor~\cite{wurman2008coordinating,fu_2026}. 
During FC operation, each station generates requests for target pods to be retrieved. The target pods may be obstructed by other pods that must be moved to access and retrieve the target. We can use a discrete search algorithm (specifically, A*~\cite{HNR1968}) to compute digout plans that extract target pods and bring them to requesting stations. The A* planner plans one pod extraction at a time, though each plan may involve several robots allocated to execute the necessary actions. Stations may generate multiple concurrent requests up to a per-station limit. The objective is to maximize throughput: the total number of pod retrievals completed per unit time across all stations.

The planner operates on a graph representation of the storage block, where vertices correspond to storage locations and edges represent adjacency between storage locations. Pods occupy vertices with the constraint that no two pods can occupy the same vertex simultaneously. Since pods cannot move themselves but require robots to lift and lower them, the planner reasons about four atomic actions: SLIDE$(p, u, v)$: Move lifted pod $p$ from vertex $u$ to adjacent vertex $v$, LIFT$(p, u)$: Change state of pod $p$ at vertex $u$ to lifted, LOWER$(p, u)$: Change state of pod $p$ at vertex $u$ to lowered, DIG$(p, u)$: Remove lifted pod $p$ from boundary vertex $u$ (send to another block).

The planner evaluates candidate digout plans using a parameterized cost function. The total cost of a plan, which is fed as the heuristic of the A* planner, is the weighted sum of action counts:
\begin{align}
\label{eq:plan_cost}
F_\theta(\text{plan}) = \alpha\cdot n_{\text{slide}} + \beta\cdot n_{\text{lift}} + \gamma\cdot n_{\text{dig}},
\end{align}
where $n_{\text{slide}}$, $n_{\text{lift}}$, and $n_{\text{dig}}$ count the number of each action type in the plan, and $[\alpha, \beta, \gamma]^\top\in\mathbb{R}^{3}$ are tunable cost parameters. The dig cost parameter $\gamma$ controls the fundamental trade-off in digout behavior. When $\gamma$ is low, the planner favors traditional digouts that send obstructing pods to other blocks, using multiple robots and adding laden robots to travel lanes. When $\gamma$ is high, the planner favors slide-only plans that shuffle pods within the block using free cells, avoiding travel lane congestion but potentially requiring longer extraction times. Intermediate values produce hybrid plans combining both strategies (Figure~\ref{fig:twoways}). In this work we focus on tuning $\gamma$ while fixing $\alpha$ and $\beta$ based on operational considerations. 

Beyond these parameters, we also have additional parameters that regulates the amount of work that can queue up at each station. Let $\rho$  be a queue size parameter that regulates (relative to a nominal value) the amount of queuing allowed at each station. In this work, we focus on tuning $\gamma$ and $\rho$ while fixing $\alpha$ and $\beta$ based on operational considerations, i.e., $\theta=\{\gamma,\rho\}\in\mathbb{R}^2$.

The A* planner includes additional parameters beyond these three that control other aspects of planning behavior, all captured in the general cost function $F_\theta$. The A* search expands states (configurations of pod locations) and selects the plan with minimum total cost to move the target pod to the boundary. For a given choice of $F_\theta$, the A* planner finds the optimal (minimum cost) individual digout plan. However, our ultimate goal is to maximize throughput: the total number of pod retrievals completed per unit time across all stations. The key challenge is determining which parameter values $\theta_t= \theta(t)$ at each time $t$ will maximize long-term cumulative throughput, even though each individual plan is optimal with respect to its local cost function $F_{\theta_t}$.

\begin{figure*}[thb]
  \centering
  \includegraphics[width=0.6\columnwidth]{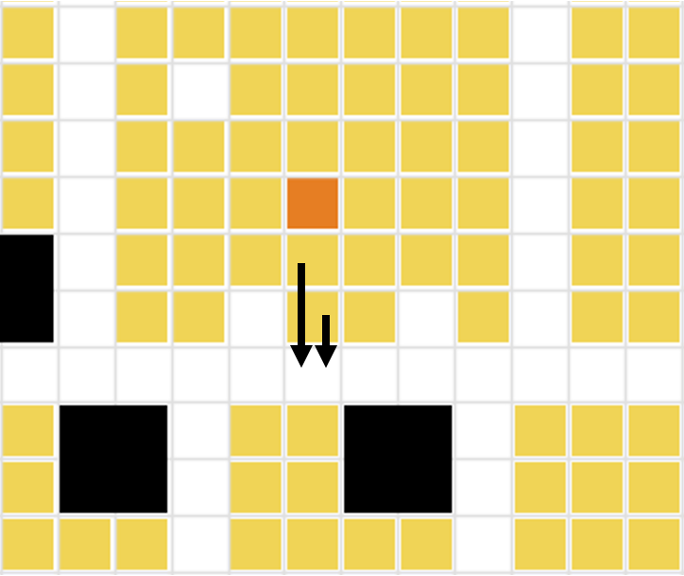}
  \hspace{0.01\columnwidth}
  \includegraphics[width=0.6\columnwidth]{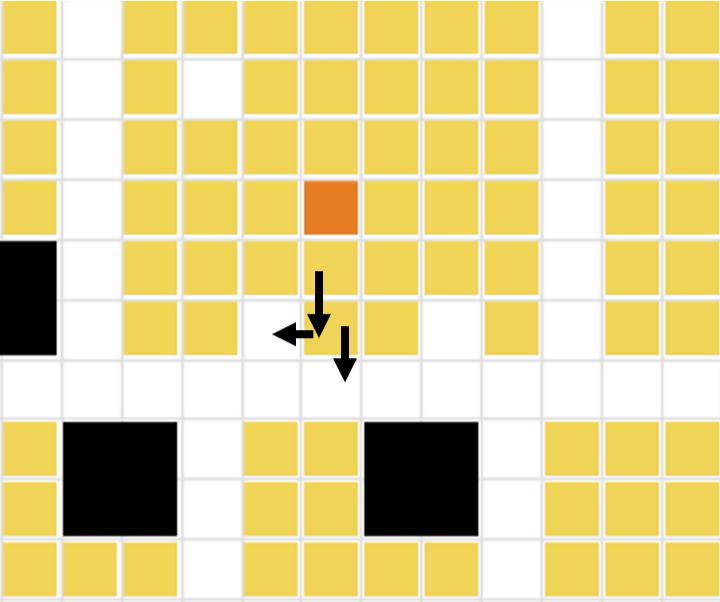}
  \hspace{0.01\columnwidth}
  \includegraphics[width=0.6\columnwidth]{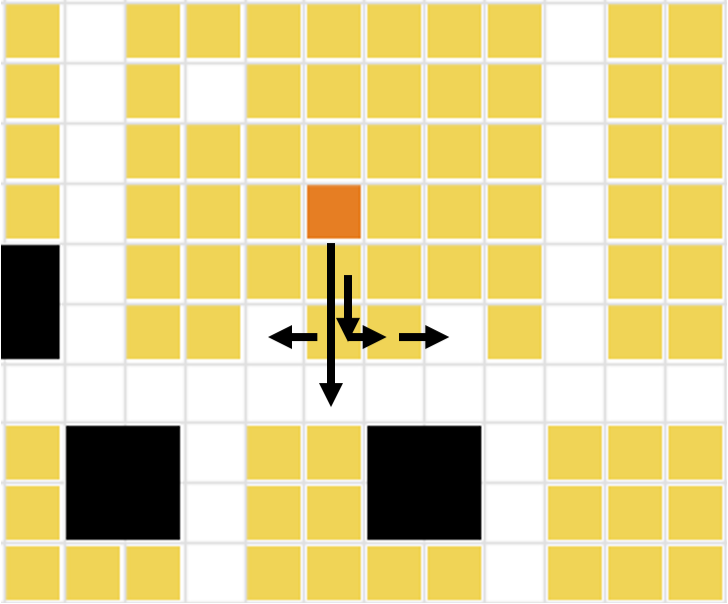}
  \caption{Three behaviors that free up space to extract the target pod (orange) that can result from setting different parameter values in $F_\theta$. The leftmost one involves sending two pods to a different block. The rightmost one sends no pods in the travel lanes and only shuffles block within a block. The middle one involves a combination of the two.}
  \label{fig:twoways}
\end{figure*}

Let $J(\theta,t)$ denote the throughput (pods retrieved per unit time) achieved under parameter $\theta$ at time $t$. The online tuning problem seeks to maximize cumulative throughput over a time horizon $T$:
\begin{align}
\label{eq:opt}
\max_{\theta_t}\;\; \mc J:=
\int_{t}^{t+T} J(\theta_s , s)\,d s
\end{align}
The optimizer does not have access to gradients $\nabla\mc J$ and only observes noisy scalar throughput measurements $J(\theta_t,t)$ during operation. At discrete update times, the tuning law adjusts the parameter:
\begin{align}
\label{eq:projection}
\theta_{t+1} \;=\;  \text{proj}_{\Theta}(\theta_t + \Delta\theta_t),
\end{align}
where $\Delta\theta_t$ is computed by the ESC controller and projection ensures $\theta_{t+1}\in\Theta\subset\mathbb{R}$.

\section{Proposed Method}\label{sec_ot}

We present an adaptive tuning approach based on dither-based ESC for optimizing planner cost parameters in multi-robot digout planning. For ease of notation, we focus on the scalar case where $\gamma$ is the parameter to be tuned with $\alpha$ and $\beta$ held constant; the approach extends naturally to the vector case $\theta\in\R^d$ (refer to Section \ref{MESC}).

\subsection{Continuous-Time Dither-Based ESC}

Consider the continuous-time performance functional $\mc J(\theta)$ that we seek to maximize. Classical dither-based ESC~\cite{ariyur2003real,krstic2000adaptive} employs sinusoidal perturbation and demodulation to estimate the gradient $\nabla\mc J(\theta)$ without explicit differentiation. The control law begins by adding a sinusoidal dither signal $\xi(t) = a\sin(\omega t)$ with amplitude $a>0$ and frequency $\omega>0$ to the parameter estimate $\hat\theta(t)$:
\begin{equation}\label{eq:esc-perturb}
\theta(t) = \hat\theta(t) + \xi(t).
\end{equation}
The measured performance $y(t) = \mc J(\theta(t))$ is then passed through a high-pass filter to remove the DC component:
\begin{equation}\label{eq:esc-hp}
\dot{y}_{\text{hp}}(t) = -\omega_h y_{\text{hp}}(t) + \dot{y}(t),
\end{equation}
where $\omega_h > 0$ is the high-pass cutoff frequency. The filtered signal is multiplied by the dither signal (demodulation) and passed through a low-pass filter:
\begin{equation}\label{eq:esc-demod}
\dot{\xi}_{\text{lp}}(t) = -\omega_l \xi_{\text{lp}}(t) + \omega_l y_{\text{hp}}(t)\sin(\omega t),
\end{equation}
where $\omega_l > 0$ is the low-pass cutoff frequency. Finally, the parameter estimate is updated via:
\begin{equation}\label{eq:esc-int}
\dot{\hat\theta}(t) =\bar{k} \xi_{\text{lp}}(t),
\end{equation}
where $\bar{k}>0$ is the integration gain. Under standard singular perturbation and averaging assumptions~\cite{krstic2000adaptive}, the demodulated signal $\xi_{\text{lp}}(t)$ approximates $(a/2)\nabla\mc J(\hat\theta)$, yielding gradient ascent on $\mc J$.

\subsection{Discrete-Time Implementation}

Since in practice we can update the dig cost $\gamma$ only periodically every $\Delta t$ time units, we consider the discrete time version of the ESC law, where the learning cost is discretized: \begin{align}
\max_{\theta_k}\;\; \mc J:=
\sum_{j=0}^{K-1} J(\theta_k, t_{k+j})\,\Delta t,
\end{align}
where $\theta_k$ is the parameter value during time interval $[t_k, t_{k+1})$, $\Delta t = t_{k+1} - t_k$ is the update period, and $K = T/\Delta t$ is the number of update intervals. 

Let $t_k = k\Delta t$ denote discrete time steps and $y_k = \mc J(\theta_k)$ the measured throughput at step $k$. The high-pass filter is discretized using backward Euler with time constant $\tau_h = 1/\omega_h$:
\begin{equation}\label{eq:esc-hp-disc}
y_k^\text{hp} = h\, (y_{k-1}^\text{hp} + y_k - y_{k-1}),
\end{equation}
where $h = 1/(1 + \omega_h\Delta t)$ is the discrete filter coefficient. This formulation removes the DC component from the throughput signal while preserving variations at the dither frequency. The demodulated signal is computed as:
\begin{equation}\label{eq:esc-demod-disc}
\xi_k = y_k^\text{hp}\sin(\omega t_k).
\end{equation}
The low-pass filter is discretized using exponential smoothing with time constant $\tau_l = 1/\omega_l$:
\begin{equation}\label{eq:esc-lp-disc}
\xi_{k}^\text{lp} = \alpha_l \xi_k + (1-\alpha_l)\xi_{k-1}^\text{lp},
\end{equation}
where $\alpha_l = \omega_l\Delta t/(1 + \omega_l\Delta t)$. The discrete integration step becomes $\hat\theta_{k+1} = \hat\theta_k + \bar{k}\,\xi_{k}^\text{lp}\,\Delta t$,
and the applied parameter is $\theta_k = \hat\theta_k + a\sin(\omega t_k)$.
The parameter $\theta_k$ is clamped to the feasible set $\Theta$ via projection. To account for the system delay $\tau$ between parameter changes and throughput response, the demodulation signal can be phase-shifted:
\begin{equation}
    \xi_k = y_k^\text{hp} \sin(\omega(t_k - \tau)).
    \label{eq:phase_comp}
\end{equation}
In the next section, we discuss how this phase change can be measured.

\subsection{Multi-Parameter Extension}\label{MESC}
The proposed framework extends naturally to vector-valued parameters $\boldsymbol{\theta} \in \mathbb{R}^d$ by using orthogonal dither frequencies. For $d$ parameters, we employ:
\begin{equation}
    \theta_i(t) = \hat{\theta}_i(t) + a \sin(\omega_i t), \quad i = 1, \ldots, d,
\end{equation}
where the frequencies $\{\omega_i\}$ are mutually distinct and sufficiently separated (e.g., $\omega_i = \omega_1 \cdot r^{i-1}$ for irrational $r$). The demodulation for each parameter uses its corresponding frequency: $\xi_{\text{lp},i} = \text{LPF}\left[ y_{\text{hp}} \sin(\omega_i t) \right]$
ensuring that cross-coupling between parameter updates averages to zero over time. 

\subsection{Convergence of Dither-Based ESC}\label{sec:esc-conv}
We analyze the convergence of the dither-based extremum seeking controller (ESC) used to tune the parameters $\theta\in\R^d$. For the analysis, we make the following two assumptions.
\paragraph*{Assumption A1}
\begin{inparaenum}
\item $\Theta$ is convex and compact; A* with $F_\theta$ is complete w.r.t.\ free space on $G$.
\item $\mc J$ is bounded, smooth and locally Lipschitz in $\theta\in\Theta$.
\item $\mc J\in C^3$ in a neighborhood $\mc N$ of a local maximizer $\theta^\star$; $\nabla \mc J(\theta^\star)=0$ and $-H^\star \prec 0$, where $H^\star:=\nabla^2 \mc J(\theta^\star)$.
\item Episode lengths $T$ are uniformly bounded, and safety monitors prevent deadlock by backoff (finite recovery).
\item Measurements of $J(\theta,t)$ may be noisy but have bounded variance.
\end{inparaenum}

\paragraph*{Assumption A2}
\begin{inparaenum}
\item Frequency $\omega>0$ satisfies $\omega\gg \omega_l$; the low-pass removes sum/difference harmonics.
\item The dither amplitude $a>0$ is small such that the higher-order terms $O(a^2)$ and filtering remainders $O(1/\omega)$ are negligible.
\end{inparaenum}

\begin{lem}
[Local practical convergence of ESC]
\label{thm:esc}
Under Assumptions A1, A2, and the dither ESC control law (\ref{eq:esc-perturb})-(\ref{eq:esc-int}) for sufficiently small $a$ and sufficiently large $\omega$, there exist $c_1,c_2,\varepsilon>0$ such that if $\|\hat\theta(0)-\theta^\star\|$ is small, then
\[
\|\hat\theta(t)-\theta^\star\| \;\le\; c_1 e^{-c_2 t}\,\|\hat\theta(0)-\theta^\star\| \;+\; O(\varepsilon),
\]
with ultimate bound $O\big(a+ \omega^{-1}\big)$. In particular, as $a\!\to\!0$ and $\omega\!\to\!\infty$, the trajectories converge to the local maximizer $\theta^\star$.
\end{lem}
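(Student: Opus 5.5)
We follow the classical averaging plus singular perturbation route of \cite{krstic2000adaptive}, specialized to the static-map setting of Assumption A1. Introduce the error $\tilde\theta=\hat\theta-\theta^\star$ and the filter states $(y_{\text{hp}},\xi_{\text{lp}})$, and write the closed loop (\ref{eq:esc-perturb})--(\ref{eq:esc-int}) in the time scale $\sigma=\omega t$. Then the dither is $2\pi$-periodic in $\sigma$, and the slow dynamics are multiplied by $1/\omega$. We treat the filter dynamics as boundary-layer states. Because $\omega\gg\omega_l$ (A2.1) and $\bar k$ is small relative to $\omega_l$, the slow variables are $\tilde\theta$ together with the low-pass state.

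\textbf{Step 1 (Taylor expansion of the map).} Using A1.3, expand $\mc J(\hat\theta+a\sin\omega t)$ around $\hat\theta$:
\[
\mc J(\hat\theta)+a\sin(\omega t)\nabla\mc J(\hat\theta)^\top e+\tfrac{a^2}{2}\sin^2(\omega t)\,e^\top\nabla^2\mc J(\hat\theta)e+O(a^3),
\]
where $e$ is the unit dither direction; for $d>0$ we use distinct frequencies $\omega_i$. The high-pass filter removes $\mc J(\hat\theta)$ up to a term of order $\dot{\hat\theta}/\omega_h$, which is $O(\bar k)$. Multiplying by $\sin(\omega t)$ and averaging over one period gives
\[
\overline{y_{\text{hp}}\sin} = \tfrac a2\nabla\mc J(\hat\theta)+O(a^2).
\]
The cross terms $\sin\omega_i t\,\sin\omega_j t$ average to zero for distinct frequencies. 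The measurement noise in A1.5 has bounded variance, so after low-pass filtering it contributes only an $O(\varepsilon)$ perturbation.

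\textbf{Step 2 (averaged system and its stability).} The averaged reduced system is
\[
\dot{\tilde\theta}=\bar k\,\tfrac a2\nabla\mc J(\theta^\star+\tilde\theta)+O(a^2).
\]
Linearizing with $\nabla\mc J(\theta^\star)=0$ gives $\dot{\tilde\theta}=\tfrac{\bar k a}{2}H^\star\tilde\theta+O(\|\tilde\theta\|^2)+O(a^2)$. Since $H^\star\prec0$, this is exponentially stable with rate $c_2\approx \tfrac{\bar k a}{2}|\lambda_{\max}(H^\star)|$. The Lyapunov function $V=\tilde\theta^\top\tilde\theta$ gives exponential convergence within a neighborhood $\mc N$. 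The $O(a^2)$ bias shifts the equilibrium by $O(a)$, because dividing by the $O(a)$ gradient gain leaves an $O(a)$ remainder. The projection onto convex $\Theta$ (A1.1) is nonexpansive and does not destroy this property.

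\textbf{Step 3 (transfer back to the original system).} Apply the averaging theorem (Khalil, Thm.~10.4) to go from the averaged system back to the periodic one. Exponential stability of the averaged equilibrium implies that the true system has a periodic orbit within $O(1/\omega)$ of it, and that trajectories satisfy the stated bound $c_1e^{-c_2t}\|\tilde\theta(0)\|+O(a+\omega^{-1})$. Then apply Tikhonov's singular perturbation theorem for the fast filter states, using exponential stability of their boundary layer (the filters are stable linear systems). This adds only an $O(\varepsilon)$ term, where $\varepsilon$ collects the small gain ratios. Finally, let $a\to0$ and $\omega\to\infty$ to obtain the last claim.

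The main obstacle is rigor in Step 1 for the actual plant. $\mc J$ is induced by a discrete-event A*-driven system, not a smooth static map, so we must lean on A1.2--A1.4. We treat the delays and episodic recovery as a bounded, uniformly stable fast dynamics whose quasi-steady output is $\mc J(\theta)$, so that the singular-perturbation argument applies. Any residual delay only introduces a phase shift $\cos(\omega\tau)$ in the gradient estimate, which must stay positive; this can be guaranteed by the phase compensation (\ref{eq:phase_comp}).
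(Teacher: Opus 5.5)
Your outline is correct and is essentially the classical averaging-plus-singular-perturbation argument of \cite{krstic2000adaptive}, which is all the paper relies on: its proof is only the citation to \cite{Benosman2016}. Points to tidy: in the $\sigma=\omega t$ scale the filter states are slow (A2.1), so they belong inside the averaged system, whose exponential stability (from the filter/$\tilde\theta$ separation, automatic for small $a$ since the effective gain is $\bar k a/2$, or by direct linearization) must be established before, not after, invoking the averaging theorem; your noise clause is only heuristic (a deterministic bound needs bounded noise or a stochastic-averaging argument), though the cited result is noise-free as well; and you rightly used $H^\star\prec 0$, since the literal A1.3 ($-H^\star\prec 0$) is a sign typo that would make $\theta^\star$ a minimizer.
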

\begin{proof}
See \cite{Benosman2016}.
\end{proof}

\begin{rmk}
In this empirical paper, the primary goal is to demonstrate how ESC
can be efficiently incorporated into a large-scale discrete-event
system. Nevertheless, we recall the main convergence result of a
continuous ESC loop for a single variable to state the sufficient
conditions of the algorithm. Extensions to multi-variable continuous
ESC are well established \cite{ariyur2003real,Benosman2016}.
Convergence guarantees under discretization still hold; see, e.g.,
\cite{Poveda2017,Sanfelice2010}. The projection step in
\eqref{eq:projection} onto the convex compact set $\Theta$ does not
affect convergence, as constrained ESC on compact sets is naturally
handled within the hybrid ESC framework of \cite{Poveda2017}.
\end{rmk}

\section{Experimental Validation}\label{sec_sims}

\subsection{Experimental Setup}
We evaluate the proposed ESC controller using a discrete-event simulator that models an FC that implements robot kinematics, pod handling dynamics, and station request patterns that broadly model FC behavior. The simulator tracks individual robot states, pod locations, and station queues to compute throughput and other operational metrics.
We use two floors, termed MAP1 and MAP2, that represent a smaller (hundreds of robots) and a larger floor (more than a thousand robots). Each simulation runs for 8 hours of simulated time, with the first 10 minutes excluded from analysis to allow the system to reach steady state. We initialize the ESC with $\hat\theta_0 = \theta_0$, $y_{\text{hp},0} = 0$, and $\xi_{\text{lp},0} = 0$. 

Recall from Section~\ref{sec:Intro} the four key challenges in parameter optimization: FC-specific optimal configurations, expensive throughput evaluation, delayed effects with difficult credit assignment, and lack of closed-form models. To address these challenges, we design four experiments corresponding to research questions RQ1--RQ4. RQ1 validates that parameter perturbations produce observable throughput changes, enabling gradient estimation. RQ2 characterizes the phase delay to inform update timing and address credit assignment. RQ3 evaluates whether adaptive policies can eliminate manual parameter sweeps by automatically converging to good configurations. RQ4 examines adaptation to dynamic operational changes.

ESC parameters for experiments reported under RQ3 and RQ4 were selected to satisfy timescale separation requirements after preliminary analysis in RQ1 and RQ2. The dither frequency $\omega = 0.002$ rad/s (period $\approx 52$ min) was chosen to be significantly slower than the 4-minute phase delay identified in RQ2. For experiments involving simultaneous adaptation of both parameters (transfer analysis and RQ4), we use frequency-separated dither signals with $\omega_\rho = 0.0028$ rad/s and $\omega_\gamma = 0.0007$ rad/s to avoid cross-coupling, as described in Section~\ref{MESC}. The high-pass filter time constant 
$\tau_h = 42$ minutes removes slow throughput drift caused by gradual floor  state changes unrelated to parameter perturbations, while the low-pass filter  time constant $\tau_l = 24$ minutes (approximately half the dither period)  smooths noisy gradient estimates by averaging over sufficient dither cycles. 
These time constants satisfy the required separation $\tau_h > \tau_l > 1/\omega$. The amplitude was chosen to be $10\%$ of the range of the parameter which was validated via RQ1 to produce observable throughput oscillations. 
The integration gain $\bar{k} = 0.0003$ was tuned to balance convergence speed against stability, yielding convergence within the 8-hour simulation horizon.

\subsection{RQ1: Does Dither in Control Input Manifest in Throughput?}

A fundamental requirement for ESC is that perturbations in the control parameter\footnote{In this experiment, we focus only on dig cost; the results are similar for $\rho$.} must produce observable changes in the performance metric (throughput). We analyze this using Power Spectral Density (PSD) analysis of both signals during ESC operation.

Figure~\ref{fig:psd} shows the PSD plots for dig cost and throughput signals. The throughput signal shows a  local peak $\geq 15$dB above the noise floor at the same frequency as the dig cost signal, demonstrating that the dither in dig cost propagates through the system and manifests as observable oscillations in throughput. This validates that the system is responsive to parameter changes and that ESC can extract gradient information from throughput measurements.

The throughput PSD also reveals a strong DC component (low-frequency content) and some high-frequency noise. The DC component represents the baseline throughput level and is removed by the high-pass filter in the ESC controller. The high-frequency noise is attenuated by the low-pass filter, ensuring that gradient estimates are based primarily on the dither-induced signal component.

\begin{figure}[t]
  \centering
  \begin{subfigure}[b]{0.46\textwidth}
    \centering
    \includegraphics[width=\textwidth]{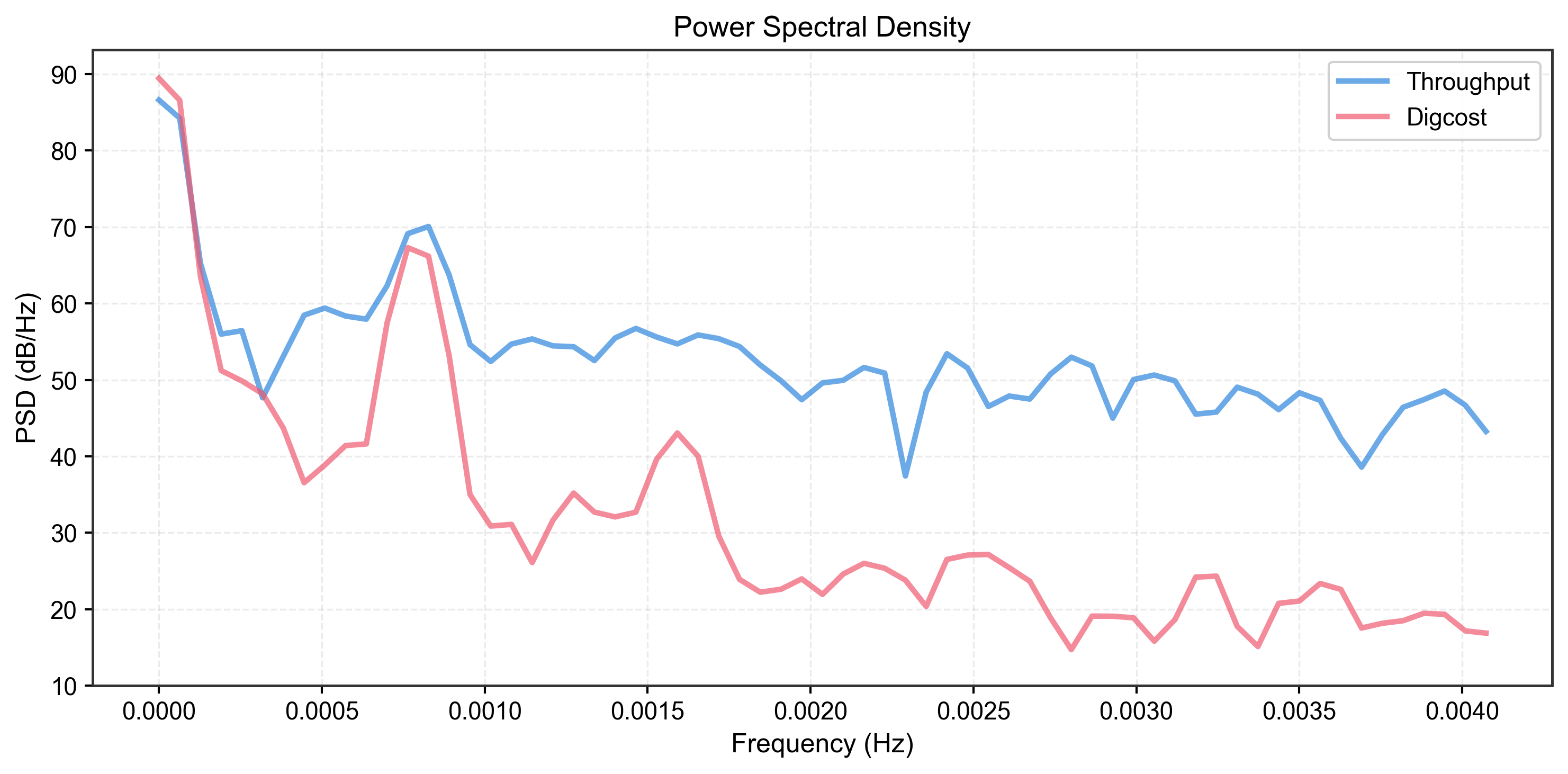}
    \caption{Power spectral density analysis showing that dither in dig cost produces corresponding oscillations in throughput at the same frequency, validating system responsiveness for ESC.}
    \label{fig:psd}
  \end{subfigure}
  \hfill
  \begin{subfigure}[b]{0.46\textwidth}
    \centering
    \includegraphics[width=\textwidth]{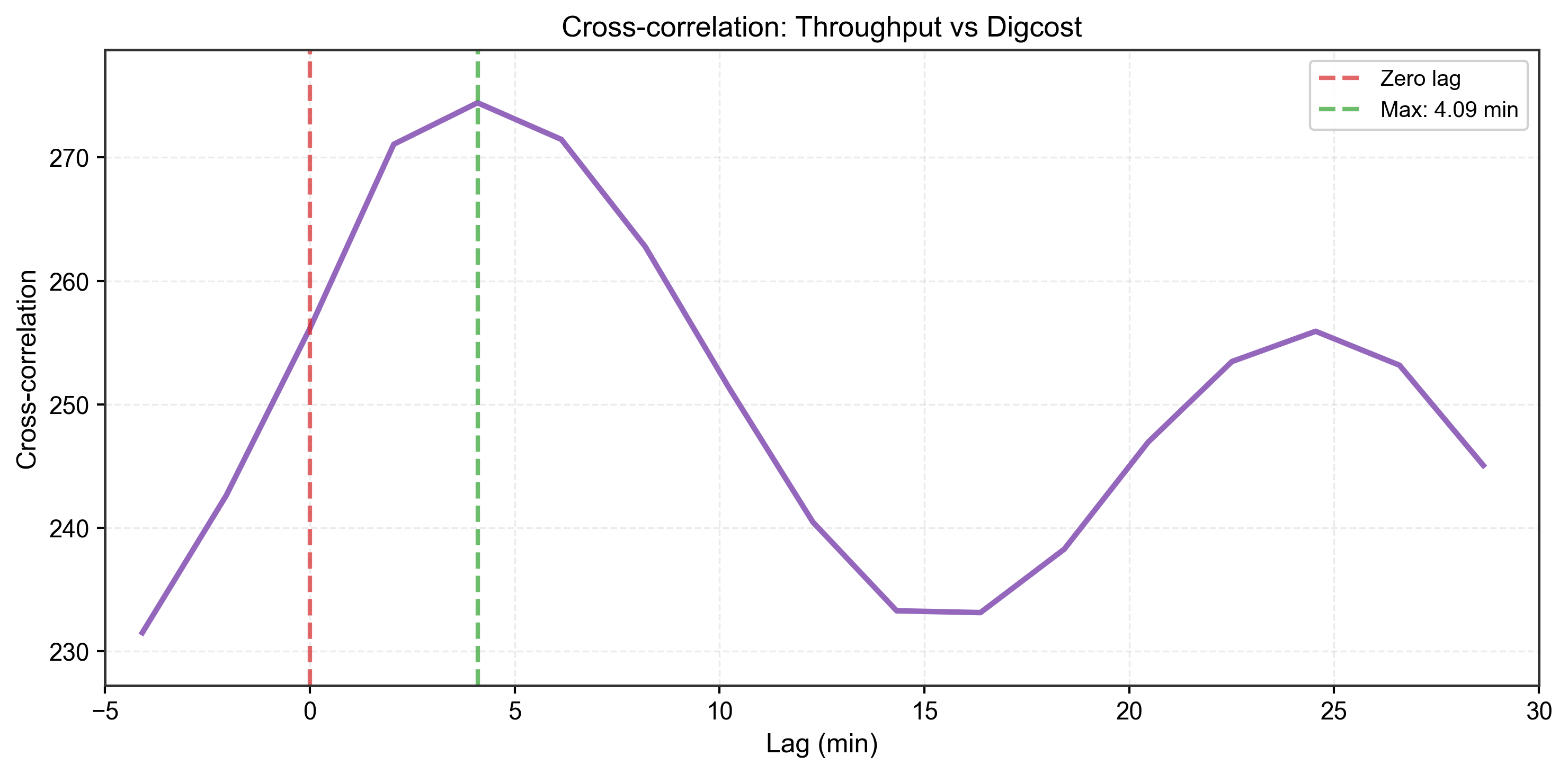}
    \caption{Cross-correlation analysis showing peak correlation at approximately 4 minutes, indicating the phase delay between dig cost changes and throughput response.}
    \label{fig:phase}
  \end{subfigure}
  \caption{System characterization: (a) frequency response validation and (b) phase delay analysis.}
  \label{fig:system_char}
\end{figure}

\subsection{RQ2: What is the Phase Delay Between Cost and Throughput?}

The effectiveness of ESC depends on proper timing between parameter updates and performance measurements. If throughput is measured before the system responds to a dig cost change, gradient estimates will be corrupted by transient effects or outdated parameter values.

We characterize the phase delay using cross-correlation analysis between dig cost and throughput signals. Figure~\ref{fig:phase} shows the correlation strength as a function of time shift. The peak correlation occurs at approximately $4$ minutes, indicating that throughput changes lag dig cost changes by this amount. This delay represents the time required for a change in dig cost to propagate through the planning decisions, robot allocations, and pod movements before affecting measured throughput. 

Based on this analysis, we set the ESC update period to $\Delta t = 5$ minutes. This ensures that when we measure throughput to compute gradient estimates, the system has had sufficient time to settle into steady-state behavior under the current dig cost. Updating more frequently would risk measuring transient effects or attributing throughput changes to the wrong parameter values.

\subsection{RQ3: Performance Under Nominal Conditions}

We evaluate whether the adaptive ESC policy can match or exceed the performance of fixed policies under nominal operating conditions. In our first test, we vary the queue size parameter $\rho \in \{1, 2, 3\}$ used to initialize the ESC policy. The initial dig cost $\gamma$ is set fixed to 0.4 in these experiments. Both the parameters are adapted using the ESC controller described in the previous section. We evaluate across five different robot counts (200, 300, 400, 500, 600) to assess scalability of the method. Each experiment was run for 8 hours of simulated time with 5 random seeds to ensure statistical reliability.

\begin{figure}[t]
  \centering
  \includegraphics[width=\columnwidth]{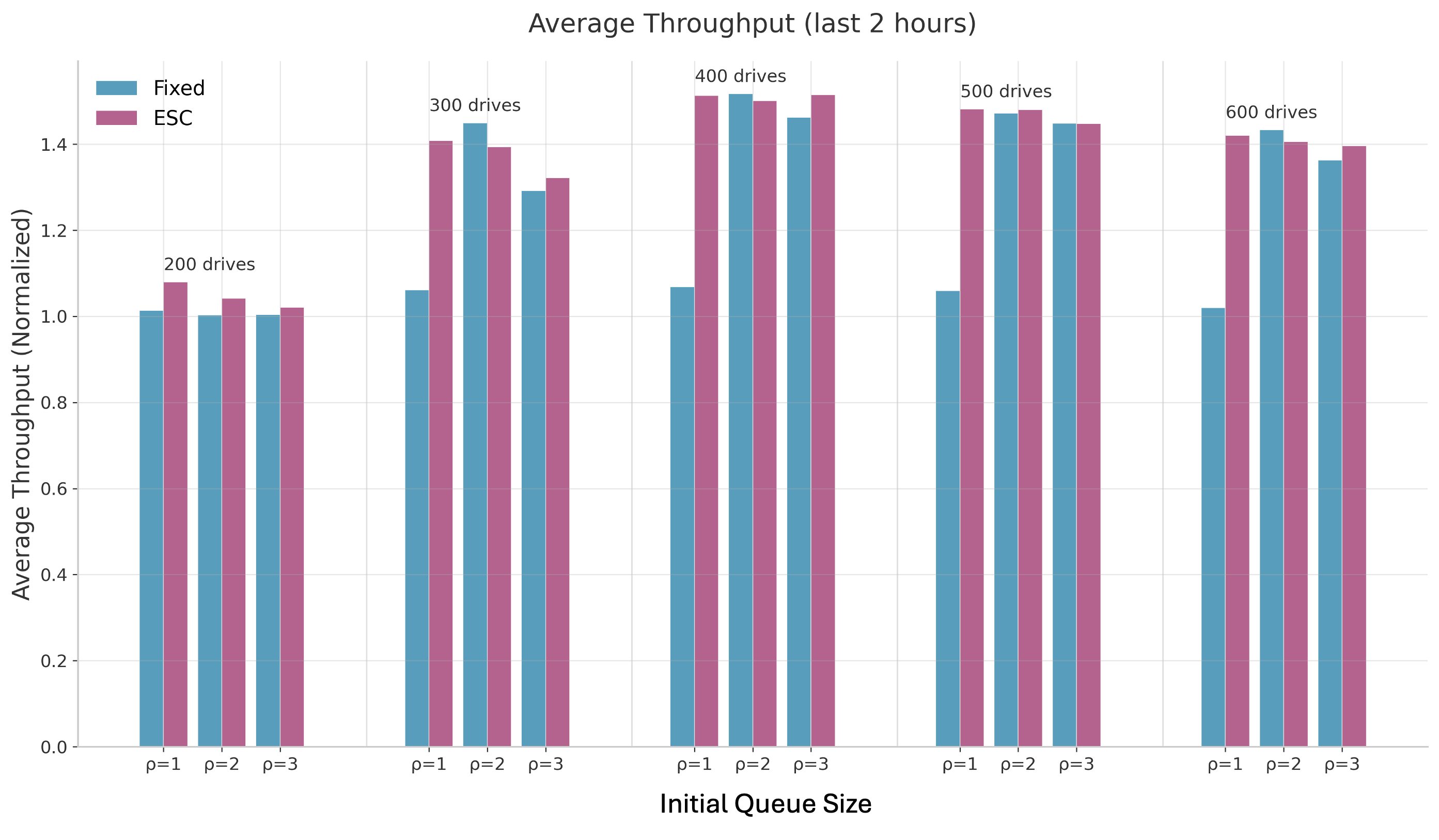}
  \caption{Throughput comparison during the last 2 hours of simulation. ESC achieves an average improvement of 11.16\% over fixed policies in steady-state, demonstrating that adaptive benefits compound as the controller converges to optimal parameter configurations.}
  \label{fig:throughput_last2h}
\end{figure}

\begin{figure}[t]
  \centering
  \includegraphics[width=\columnwidth]{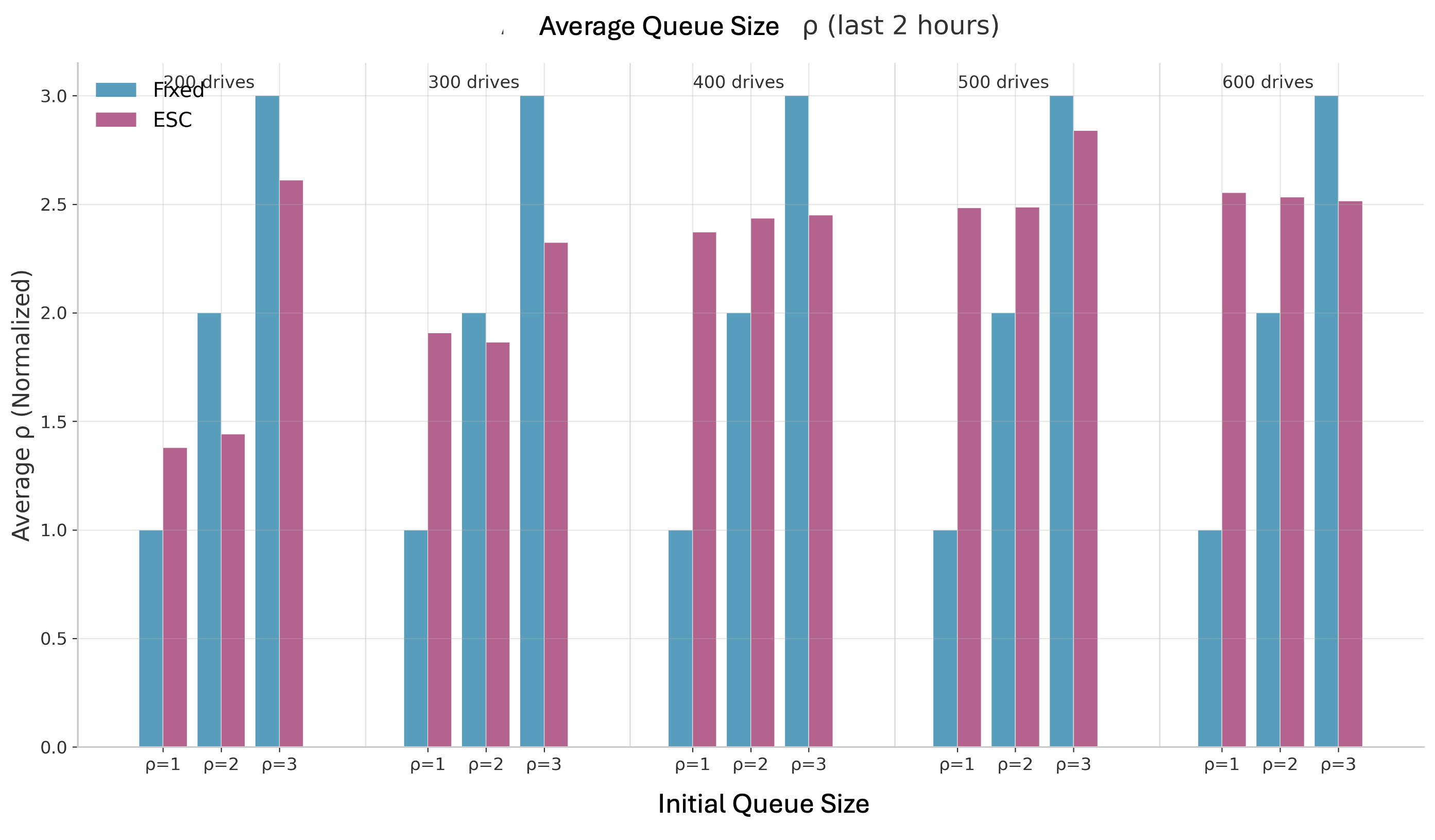}
  \caption{Convergence of queue size parameter $\rho$ across all experimental conditions. ESC consistently drives $\rho$ toward optimal values regardless of initialization, with final converged values showing clear patterns: $\rho \approx 1.6-2.0$ for most conditions, demonstrating automatic parameter discovery without manual tuning.}
  \label{fig:rho_convergence}
\end{figure}

Figure~\ref{fig:throughput_last2h} presents the throughput comparison between fixed and ESC dither policies across all experimental conditions. The figure shows data from the final 2 hours of each 8-hour simulation, representing the system's behavior after ESC adaptation.  The results demonstrate that ESC consistently outperforms fixed policies, with the magnitude of improvement varying by operational regime, the initial $\rho$ setting, and the system scale.

Figure~\ref{fig:rho_convergence} demonstrates ESC's ability to automatically discover optimal parameter configurations across all experimental conditions. The figure shows the average queue size $\rho$ during the final 2 hours of simulation, representing the converged parameter values after 8 hours of ESC adaptation. Regardless of the initial $\rho$ value (1, 2, or 3), ESC consistently drives the parameter toward a narrow optimal range of approximately $\rho = 1.6-2.0$ across different robot counts. The tight clustering of final $\rho$ values around the optimal range, independent of initialization, provides strong evidence that ESC discovers true system optima rather than merely improving upon poor initial guesses.

At low queue sizes ($\rho = 1$), ESC achieves substantial improvements of 24-42\% over fixed policies, indicating that adaptive parameter tuning provides the greatest benefit when the system operates far from optimal initial conditions. At medium queue sizes and high queue sizes ($\rho = 2$), the performance difference is more modest, suggesting that the initial parameter choice is closer to optimal. The slight performance degradation observed in these regimes is expected when ESC is initialized near optimal parameter values, as the dither signal necessary for gradient estimation introduces small perturbations around the optimum that can temporarily reduce performance.

To illustrate the adaptive behavior of ESC, Figure~\ref{fig:esc_adaptation} shows the time-series evolution of both queue size $\rho$ alongside throughput for a representative experiment (300 robots, initial $\rho = 1$). The figure illustrates an example of how ESC automatically adjusts the queue size from its initial value of 1 to an optimal range around 2, corresponding to the system's learned preference for moderate queuing levels that balance station utilization with congestion avoidance. 

\begin{figure}[t]
  \centering
  \includegraphics[width=\columnwidth]{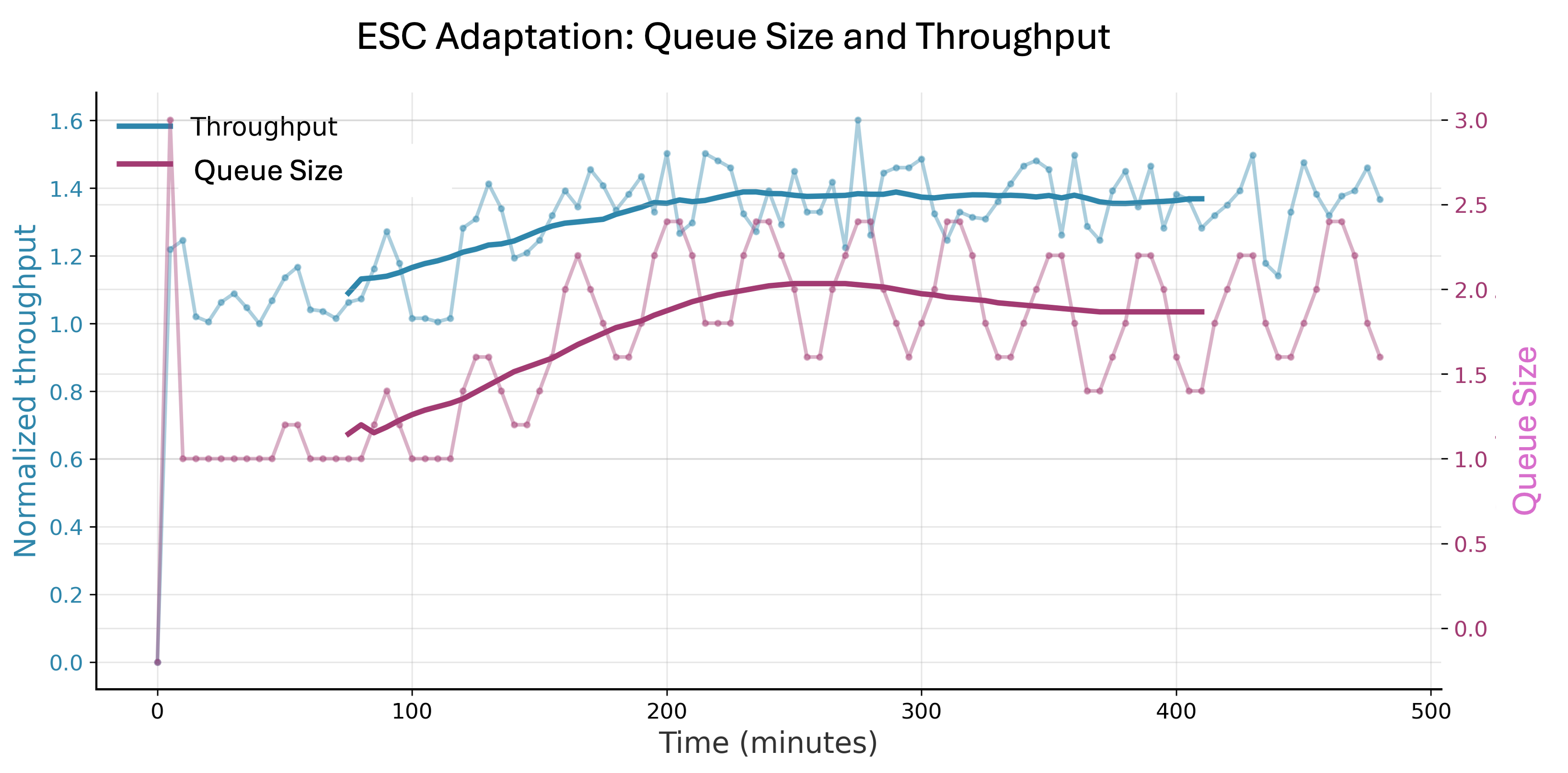}
  \caption{Time-series evolution of ESC parameter adaptation for a representative experiment (300 robots, initial $\rho = 1$): Queue size $\rho$ adaptation showing convergence from initial value of 1 to optimal range around 2. Note that we plot the normalized $\rho$ value; the true, unnormalized $\rho$ value used in the simulations is an integer which is why the sinusoidal dither looks rounded off.}
  \label{fig:esc_adaptation}
\end{figure}

In the experiments with MAP1, the choice of dig cost does not affect the throughput significantly. However, MAP2 is a much larger warehouse with significantly higher number of robots (1600) than MAP1. MAP2 shows much more sensitivity to the choice of dig cost $\gamma$. Table~\ref{tab:normalized_results} presents normalized throughput comparison between the fixed and the ESC policy on MAP2 as a function of the initial dig cost. When initialized far from optimal parameters (normalized dig cost = 0.4), ESC achieves substantial improvements of 3-4\%, demonstrating effective parameter discovery. When initialized closer to optimal values (normalized dig cost = 0.5-0.9), ESC shows modest but consistent improvements of 1-2\%, with the controller maintaining stability when gradients are near zero.
\begin{table}[t]
\centering
\small
\caption{Normalized throughput and dig cost comparison between fixed and ESC policies on MAP2 with 1600 robots.}
\label{tab:normalized_results}
\begin{tabular}{cccc}
\toprule
\textbf{Initial Dig Cost} & \textbf{Fixed} & \textbf{ESC} & \textbf{Improvement} \\
\textbf{(Normalized)} & \textbf{Throughput} & \textbf{Throughput} & \textbf{(\%)} \\
\midrule
0.4 & 1.00 & 1.035 & 3.5 \\
0.5 & 1.04 & 1.075 & 3.4 \\
0.6 & 1.02 & 1.038 & 1.8 \\
0.7 & 1.03 & 1.048 & 1.8 \\
0.8 & 1.03 & 1.048 & 1.8 \\
0.9 & 1.04 & 1.044 & 0.6 \\
\bottomrule
\end{tabular}
\end{table}


\paragraph*{Transfer Regret Analysis}
A practical concern with fixed policies is that parameters optimized for one FC configuration may not transfer well to different configurations. As stated earlier, manually running sweeps of the parameters to find the best configuration for each new map is not practical. Instead one may pick representative configurations to find the best parameters on, and then use them in other similar configurations. 

To quantify this, we evaluate the transfer regret of fixed policies: the throughput loss incurred when deploying parameters tuned on one configuration (the ``train'' setting) to a different configuration (the ``test'' setting), compared to an ESC policy that adapts online. We consider three transfer scenarios: (i) train and test use the same map (MAP1) but different number of robots, (ii) train and test use different maps but same number of robots, and (iii) train and test use different map and different number of robots. The fixed policy uses parameters optimized for 300 robots on MAP1; the ESC policy starts from the same initial parameters but adapts online during each test run. The test configurations use 200 and 400 robots and version of MAP1 that has larger individual blocks (i.e., the test map has blocks that are $(N+1)\times (N+1)$ instead of $N\times N$).

Table~\ref{tab:transfer} summarizes the results. Each entry reports the mean throughput improvement of ESC over the transferred fixed policy, computed per-seed across 5 random seeds per configuration, along with the 95\% confidence interval. Across all 25 unique test seeds, ESC improves throughput by an average of $5.0\%$ (95\% CI: $[2.6\%, 7.4\%]$). The worst-case ESC loss across all seeds is $-3.0\%$, while the best-case gain is $+17.3\%$. The ESC loss can be attributed to the dither that intentionally perturbs the system from the optimal configuration. That marginal loss is amply compensated by the significant gains over using fixed configurations.

\begin{table}[t]
\centering
\small
\caption{Transfer regret analysis: mean ESC improvement over fixed policy with 95\% confidence intervals. Fixed policy uses parameters tuned for 300 robots on MAP1.}
\label{tab:transfer}
\begin{tabular}{lcc}
\toprule
\textbf{Transfer Scenario} & \textbf{Mean $\Delta$\%} & \textbf{95\% CI} \\
\midrule
Same map, diff.\ num.\ robots & $+4.2\%$ & $[+2.4\%, +5.9\%]$ \\
Diff.\ map, same num.\ robots & $+5.6\%$ & $[+1.4\%, +9.7\%]$ \\
Diff.\ map, diff.\ num.\ robots & $+4.0\%$ & $[-1.4\%, +9.3\%]$ \\
\bottomrule
\end{tabular}
\end{table}

These results highlight that fixed policies incur non-trivial transfer regret when deployed outside their training configuration, particularly when the map changes. ESC's ability to adapt online eliminates this transfer cost, providing robust performance across diverse operating conditions without requiring configuration-specific parameter sweeps.

\subsection{RQ4: Performance Under Dynamic Conditions}
A key motivation for adaptive control is the ability to respond to changing operational conditions without manual intervention. We simulate a scenario on MAP2 (1600 robots) where 50\% of pick stations log off at $t = 150$ minutes, representing a common operational event such as a shift change. Figure~\ref{fig:dynamic_comparison} shows a representative run where the fixed policy maintains constant $\rho$ while throughput drops sharply after the station reduction, whereas the adaptive ESC policy adjusts $\rho$ in response to the changed conditions.

To quantify the effect, we run 5 paired experiments with randomized logout times uniformly sampled between 150 and 250 minutes, controlling for the phase relationship between the dither signal and the disturbance. Table~\ref{tab:dynamic} summarizes the averaged results. Pre-logout, ESC achieves a modest $+3.5\%$ improvement from online tuning of $\rho$. Post-logout, the advantage increases to $+14.3\%$ as ESC adapts $\rho$ from approximately $3.3$ to $4.3$ in response to the reduced station capacity, while the fixed policy remains at $\rho = 3$. Over the full 8-hour simulation, ESC improves throughput by $+8.4\%$. Note that the dig cost $\gamma$ does not change significantly across these runs, as the adaptation is driven primarily by $\rho$.

\begin{table}[t]
\centering
\small
\caption{Dynamic conditions: mean ESC improvement over fixed policy across 5 experiments with randomized station logout times (150--250 min). MAP2, 1600 robots, 50\% station logout.}
\label{tab:dynamic}
\begin{tabular}{cccccc}
\toprule
& \textbf{$\Delta$\% pre} & \textbf{$\Delta$\% post} & \textbf{$\Delta$\% full} & \textbf{$\rho$ pre} & \textbf{$\rho$ post} \\
\midrule
\textbf{Mean} & $+3.5$ & $+14.3$ & $+8.4$ & $3.3$ & $4.3$ \\
\bottomrule
\end{tabular}
\end{table}


\begin{figure}[h]
    \centering
    \includegraphics[width=\columnwidth]{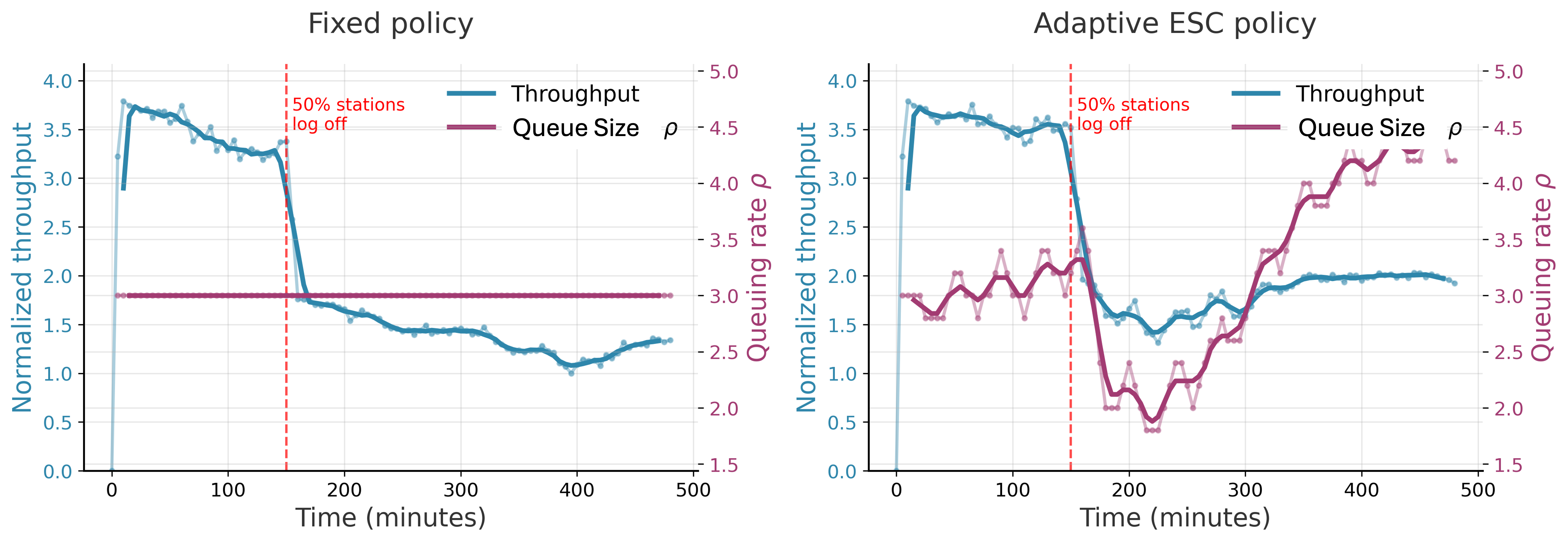}
    \caption{Representative response to 50\% station reduction on MAP2 with 1600 robots. The fixed policy (left) maintains constant $\rho$ while throughput drops. The adaptive ESC policy (right) increases $\rho$ after the disturbance, recovering higher throughput.}
    \label{fig:dynamic_comparison}
\end{figure}

\section{Conclusions}\label{sec_conc}

We presented an adaptive parameter tuning framework for a multi-robot planner in robotic fulfillment centers. The framework addresses two key challenges in dense storage operations: eliminating time-consuming manual parameter sweeps across different FC configurations, and enabling real-time adaptation to dynamic operational conditions. Our approach uses ESC, a model-free adaptive control technique that continuously adjusts planner parameters in response to measured throughput. By perturbing the parameter with a sinusoidal dither signal and correlating perturbations with throughput changes, ESC estimates gradients without requiring an analytical model of the complex relationship between planning parameters and system-level performance. This is relevant given the delayed effects and credit assignment challenges inherent in FC operations, where local planning decisions ripple through the floor over time. Through simulation studies using realistic FC configurations, we demonstrated that: (1) parameter perturbations manifest as observable throughput changes, validating the feasibility of gradient-based adaptation; (2) the $4$-minute phase delay between cost changes and throughput response informs appropriate update timing; (3) the adaptive policy improves upon fixed policies.

This work bridges search-based planning with adaptive control, providing a self-tuning paradigm for dense FC operations. The ESC framework generalizes beyond the two planning and allocation parameters considered in this work and could be applied to other parameters or planning systems where the relationship between control parameters and performance metrics is complex and time-varying.  

\bibliographystyle{IEEEtran}
\bibliography{refs}


\end{document}